\documentclass[10pt]{article} 

\usepackage[accepted]{rlj}           

\usepackage{amssymb}            
\usepackage{mathtools}          
\usepackage{mathrsfs}           
\usepackage{graphicx}           
\usepackage{subcaption}         
\usepackage[space]{grffile}     
\usepackage{url}                
\usepackage{lipsum}             

\usepackage{amsmath, amsthm}
\usepackage{algorithm}
\usepackage{algpseudocode}
\newcommand{\defeq}{\overset{\textrm{def}}{=}}
\newcommand{\w}{\mathbf{w}}
\definecolor{tab_blue}{RGB}{99,136,180}
\definecolor{tab_orange}{RGB}{255,174,52}

\title{Extending Differential Temporal Difference Methods for Episodic Problems}

\setrunningtitle{Extending Differential Temporal Difference Methods for Episodic Problems}

\author{Kris De Asis\textsuperscript{1}, Mohamed Elsayed\textsuperscript{2,3}, Jiamin He\textsuperscript{2,3}}

\emails{kris@pengy.ca, \{meelsaye,jiamin12\}@ualberta.ca}

\affiliations{
$^{1}$\textbf{Openmind Research Institute}\\
$^{2}$\textbf{Department of Computing Science, University of Alberta, Canada}\\
$^{3}$\textbf{Alberta Machine Intelligence Institute (Amii)}\\
}

\contribution{
    We extend differential TD methods to maintain the ordering of policies in episodic problems. This extension generalizes differential TD to accommodate both discounted and undiscounted settings across episodic and continuing tasks.
    }
    {
    Differential TD methods rely on reward centering, which has been shown to alter the optimal policy in episodic problems \citep{naik2024centering, lee2025hyperspherical}. We emphasize both discounted and undiscounted cases because the undiscounted episodic setting requires a distinct update rule.
    }

\contribution{
    Through reparameterization, we demonstrate the equivalence between (generalized) differential TD and standard TD with a state-action-independent bias unit. By framing differential TD as a specific feature representation, we establish convergence guarantees for both episodic and discounted differential TD settings.
    }
    {
    \cite{naik2024centering} recognized the resemblance to a bias unit but maintained that reward centering is a different mechanism.
    }

\contribution{
    We extend Stream Q($\lambda$) and Stream AC($\lambda$) to their differential TD counterparts. Through both tabular and non-linear function approximation, we empirically demonstrate improved sample complexity across a variety of episodic environments. Our results suggest that these benefits are particularly pronounced in environments with dense rewards, which typically exhibit large value deviations across states. Notably, in cases where centering did not provide an improvement, it did not degrade performance.
    }
    {
    Our focus on streaming algorithms is motivated by recent work highlighting the importance of normalization---of which centering is a specific form---in these settings \citep{elsayed2024streamdrl}. We note that prior work on infinite-horizon reward centering \citep{naik2024centering} has demonstrated the advantages of reward centering with the use of replay buffers. This suggests that benefits in episodic problems will likely persist in the presence of other components (e.g., replay buffers), rather than being overshadowed by them.
    }

\keywords{Differential TD, reward centering, streaming, normalization} 

\summary{Differential temporal difference (TD) methods are value-based reinforcement learning algorithms that have been proposed for infinite-horizon problems. They rely on reward centering, where each reward is centered by the average reward. This keeps the return bounded and removes a value function’s state-independent offset. However, reward centering can alter the optimal policy in episodic problems, limiting its applicability. Motivated by recent works that emphasize the role of normalization in streaming deep reinforcement learning, we study reward centering in episodic problems and propose a generalization of differential TD. We prove that this generalization maintains the ordering of policies in the presence of termination, and thus extends differential TD to episodic problems. We show equivalence with a form of linear TD, thereby inheriting theoretical guarantees that have been shown for those algorithms. We then extend several streaming reinforcement learning algorithms to their differential counterparts. Across a range of base algorithms and environments, we empirically validate that reward centering can improve sample efficiency in episodic problems.
}

\begin{document}

\maketitle  

\begin{abstract}
Differential temporal difference (TD) methods are value-based reinforcement learning algorithms that have been proposed for infinite-horizon problems. They rely on reward centering, where each reward is centered by the average reward. This keeps the return bounded and removes a value function’s state-independent offset. However, reward centering can alter the optimal policy in episodic problems, limiting its applicability. Motivated by recent works that emphasize the role of normalization in streaming deep reinforcement learning, we study reward centering in episodic problems and propose a generalization of differential TD. We prove that this generalization maintains the ordering of policies in the presence of termination, and thus extends differential TD to episodic problems. We show equivalence with a form of linear TD, thereby inheriting theoretical guarantees that have been shown for those algorithms. We then extend several streaming reinforcement learning algorithms to their differential counterparts. Across a range of base algorithms and environments, we empirically validate that reward centering can improve sample efficiency in episodic problems.
\end{abstract}

\section{Introduction}


The average reward formulation of reinforcement learning \citep{mahadevan1996avgreward}---which can be described as an undiscounted objective for continuing problems---has led to the development of algorithms that shift rewards by the average reward \citep{schwartz1993rlearning, sutton2018rlbook, wan2021difftd}. This mean-centering of rewards prevents the undiscounted, infinite sum of rewards from diverging. Temporal difference (TD) methods which predict this sum of centered rewards form the \textit{differential} TD family of algorithms. Recent work separated centering from the average reward formulation by demonstrating its utility in discounted problems \citep{naik2024centering, naik2024avgreward}---a setting where centering is not necessary for bounding an infinite sum of rewards. However, its use remains limited to continuing problems because in episodic problems, the ordering of policies is not preserved when rewards are shifted. To illustrate this, consider an episodic problem where some positive constant $c$ is subtracted from every reward. Subtracting a sufficiently large $c$ produces optimal behavior which terminates as quickly as possible. Conversely, adding a sufficiently large $c$ to every reward encourages behavior that prolongs the episode (i.e., avoids termination). 

Normalization methods have recently garnered interest in deep reinforcement learning (e.g., \citealp{lyle2023plasticity}; \citealp{lyle2024normalization}; \citealp{palenicek2025weightnorm}). Notably, normalization has shown substantial benefit in \textit{streaming} deep reinforcement learning \citep{vassan2024streampg, elsayed2024streamdrl}---the buffer-free, online, incremental learning setup of the original reinforcement learning algorithms \citep{sutton1988td, sutton2018rlbook}. Much of the recent work on normalization has focused on techniques such as input centering and scaling \citep{sutton1988nadaline}, layer normalization \citep{ba2016layernorm}, and output scaling. However, less attention has been paid to output centering, with the lack of optimal policy invariance in episodic problems cited as the concern with it \citep{lee2025hyperspherical}. Episodic environments are widely used for evaluation (e.g., \citealp{young2019minatar}; \citealp{towers2024gymnasium}), motivating a revisit of differential TD and exploring whether its applicability can be expanded.

In this work, we introduce a strict generalization of differential TD that extends its applicability to both discounted and undiscounted episodic problems. Through the lens of potential-based reward shaping, we prove that the modification maintains invariance of the optimal policies. We further show an equivalence between differential TD and a state-and-action-independent, output-level bias unit, establishing that the algorithm shares theoretical guarantees (e.g., convergence to the fixed point) with those of TD with linear function approximation. In tabular episodic problems, we highlight the utility of centering and identify scenarios where we might expect improvement. Finally, in the streaming deep reinforcement learning setting, we show that our generalization of differential TD integrates seamlessly into existing algorithms, scales effectively to non-linear function approximation, and preserves the sample complexity benefits previously observed in continuing problems.

\section{Background and related work}

Reinforcement learning is typically formalized as a Markov decision process (MDP), characterized by a set of states $\mathcal{S}$, sets of each state's available actions $\mathcal{A}(s)$, and an environment transition model $p(s',r|s,a) = P(S_{t+1} = s', R_{t+1} = r|S_t = s, A_t = a)$. For each discrete time step $t$, an agent observes its current state $S_t \in \mathcal{S}$, selects an action $A_t \in \mathcal{A}(S_t)$, and jointly samples a next state $S_{t+1} \in \mathcal{S}$ and reward $R_{t+1} \in \mathbb{R}$ according to the environment transition model. Actions are selected according to a policy $\pi(a|s) = P(A_t = a|S_t = s)$, and reinforcement learning agents in control problems aim to find the optimal policy $\pi^*$ which maximizes a reward-based objective. A common objective is to maximize the expected discounted return. The return is given by:
\begin{equation*}
  G_t \defeq \sum_{k=0}^{T-t-1}\gamma^k R_{t+k+1},
\end{equation*}
where $\gamma \in [0, 1]$ and $T$ being an episode's final time-step, or $\gamma \in [0, 1)$ and $T=\infty$ in infinite-horizon, continuing problems. Value-based methods for reinforcement learning compute or approximate \textit{value-functions}, which are defined to be expected returns conditioned on a state (or state-action pair) under a policy $\pi$:
\begin{align*}
v_\pi(s) &\defeq \mathbb{E}_\pi[G_t|S_t=s], \forall s \\
q_\pi(s,a) &\defeq \mathbb{E}_\pi[G_t|S_t=s,A_t=a], \forall s,a,
\end{align*}
with $v_\pi(s)$ denoted the \textit{state-value} function and $q_\pi(s,a)$ denoted the \textit{action-value} function. The process of computing a policy's value function is referred to as \textit{policy evaluation}. Such values may then inform decisions via \textit{policy improvement}---a theorem stating that behaving greedily with respect to $q_\pi$ will result in an improved policy $\pi'$ where $q_{\pi'}(s, a) \geq q_\pi(s,a)$, $\forall s,a$. Policy evaluation and improvement can then alternate in a process of \textit{policy iteration} to approach an optimal policy.

A popular approach to policy evaluation makes use of a value-function's Bellman equation, where a decision point's value is expressed in terms of successor decision point values. For example, for $v_\pi$:
\begin{equation*}
    v_\pi(s)=\sum_a{\pi(a|s)\sum_{s',r}{p(s',r|s,a)\big(r + \gamma v_\pi(s')\big)}}, \forall s.
\end{equation*}
Given a transition $(S_t, A_t, R_{t+1}, S_{t+1})$, temporal difference (TD) methods \citep{sutton1988td} form a sample-based estimate of $v_\pi(S_t)$ based on its Bellman equation and take a step toward this target:
\begin{gather*}
    V(S_t) \leftarrow V(S_t) + \alpha \big( R_{t+1} + \gamma V(S_{t+1}) - V(S_t) \big),
\end{gather*}
where $V\approx v_\pi$ is a learned, approximate value function and $\alpha \in [0,1]$ is the step-size.

An alternative to the discounted objective is the average reward criterion \citep{mahadevan1996avgreward}, where an agent seeks to maximize its reward per step from some starting state $S_0$:
\begin{equation*}
    r(\pi,s)\defeq \lim_{n\rightarrow\infty}\frac{1}{n}\sum_{t=1}^n{\mathbb{E}[R_t|S_0 = s, A_{0:t-1} \sim \pi]}, \forall s, \pi.
\end{equation*}
A unichain assumption is typically made on the MDP, making $r(\pi,s)$ independent of state and simplifying our notation to $r(\pi)$. This objective is akin to maximizing an undiscounted return in an infinite-horizon, continuing setting. Standard value-based methods are not applicable here as undiscounted, infinite-horizon returns are generally infinite. Value-based, average reward algorithms instead work with \textit{differential} returns:
\begin{equation*}
    G_t^\Delta \defeq \sum_{k=0}^\infty{\big(R_{t+k+1} - r(\pi)\big)},
\end{equation*}
where the average reward is subtracted from each reward to ensure the sum converges. Given corresponding differential value functions (e.g., $v_\pi^\Delta(s) \defeq \mathbb{E}_\pi[G_t^\Delta|S_t=s]$), a TD method for this setting maintains an estimate of its average reward (which we denote $b$) and uses this to estimate the differential return:
\begin{gather*}
    b\leftarrow b + \eta \alpha \big(R_{t+1}-b\big) \\
    V^\Delta(S_t)\leftarrow V^\Delta(S_t) + \alpha \big( R_{t+1} - b + V^\Delta(S_{t+1}) - V^\Delta(S_t)\big),
\end{gather*}
where $\eta \in [0,1]$ produces an effective step size of $\eta\alpha$ for the update to $b$, which usually has a slower time-scale. This algorithm which directly averages sampled rewards is called R-learning \citep{schwartz1993rlearning}.
This was later improved upon by \cite{wan2021difftd} with the differential TD algorithm:
\begin{gather}
    \delta = R_{t+1} - b + V^\Delta(S_{t+1}) - V^\Delta(S_t) \label{equ:differential-td}\\
    b \leftarrow b + \eta \alpha \delta \nonumber\\
    V^\Delta(S_t)\leftarrow V^\Delta(S_t) + \alpha \delta \nonumber
\end{gather}
In addition to better empirical performance, updating $b$ using the value update's error allows for \textit{off-policy} estimation of the average reward. That is, $b$ converges to the average reward of the policy being evaluated, allowing it to differ from that which chooses actions.

Recent work by \cite{naik2024centering} reintroduced $\gamma$ into Equation \ref{equ:differential-td}, decoupling differential TD's reward centering mechanism from the average reward objective and demonstrating its utility on discounted objectives. This extension was motivated by removing an often large, state-independent offset in the value function that is evident in a value function's Laurent series decomposition:
\begin{equation*}
 v_\pi(s) = \frac{r(\pi)}{1 - \gamma} + v_\pi^\Delta(s) + e_\pi(s, \gamma), \forall s,
\end{equation*}
where $v_\pi(s)$ is a discounted value function, $v_\pi^\Delta(s)$ is an undiscounted differential value function, and $e_\pi(s, \gamma)$ is an error term that captures the difference between discounted and undiscounted values (and vanishes as $\gamma \rightarrow 1$). Subtracting $r(\pi)$ from each reward in a discounted, infinite-horizon return results in a subtraction of $\frac{r(\pi)}{1 - \gamma}$ from the return, thus canceling the constant in the above decomposition. 
Reward centering was shown to improve sample efficiency but remained limited to continuing problems. In episodic problems, the shift in return from shifts in reward depends on the remaining episode length. Because the remaining episode length varies across states and actions, invariance of the optimal policies is not guaranteed.

Interestingly, differential TD is a possible explanation for the interplay between phasic and tonic dopamine in the brain \citep{gershman2024difftddopamine}. This biological plausibility further motivates developing and understanding centered TD algorithms.

\section{Centering rewards in the presence of termination}
\label{sec:pbrs}

In this section, we demonstrate how to maintain invariance of the optimal policies under reward centering. In particular, we consider a view of reward centering as potential-based reward shaping \citep{ng1999rewardshaping}. Given some function $F(s, a, s')$ of the form:
\begin{equation*}
    F(s,a,s')=\gamma\Phi(s')-\Phi(s),
\end{equation*}
where $\Phi(S_T) \defeq 0$, adding $F(s,a,s')$ to each reward maintains invariance of the optimal policies while having an effect on learning dynamics. Without assumptions on the MDP, $r(s,a,s')+F(s,a,s')$ was shown to be the only reward transformation with this property \citep{ng1999rewardshaping}. For some free variable $b$, if we define $\Phi(s)$:
\begin{equation*}
    \Phi(s) \defeq \frac{b}{1 - \gamma},
\end{equation*}
we get the following state-independent reward shaping term:
\begin{align*}
    F(s,a,s') &= \gamma \frac{b}{1 - \gamma} - \frac{b}{1 - \gamma} \\
    &= b\frac{\gamma - 1}{1 - \gamma} \\
    &= -b.
\end{align*}
This produces a constant shift in reward. If $b$ estimates the average reward, this shaping term in continuing problems recovers differential TD and validates that the ordering of policies remains unchanged. However, reward shaping makes no assumption about the problem setting---if we recognize that $\Phi(s)$ must be zero at terminal states \citep{grzes17episodicshaping}, we get:
\begin{equation*}
    F(s,a,s') = 
    \begin{cases}
        \frac{-b}{1 - \gamma}, & \textrm{if } s' \textrm{ is terminal} \\
        -b, & \textrm{otherwise}
    \end{cases}
\end{equation*}
This leads to the following TD updates:
\begin{equation*}
    V^\Delta(S_t) \leftarrow
    \begin{cases}
        V^\Delta(S_t) + \alpha \big( R_{t+1} - \frac{b}{1 - \gamma} - V^\Delta(S_t) \big), & \textrm{if } s' \textrm{ is terminal} \\
        V^\Delta(S_t) + \alpha \big(R_{t+1} - b + \gamma V^\Delta(S_{t+1}) - V^\Delta(S_t) \big), & \textrm{otherwise}
    \end{cases}
\end{equation*}
which can be equivalently expressed through the following terminal differential value definition:
\begin{gather*}
    V^\Delta(S_t) \leftarrow V^\Delta(S_t) + \alpha \big(R_{t+1} - b + \gamma V^\Delta(S_{t+1}) - V^\Delta(S_t) \big) \\
    V^\Delta(S_{T}) \defeq \frac{-b}{1-\gamma}.
\end{gather*}
The intuition behind this terminal value definition lies in the equivalence between a terminal state and an infinitely self-looping state with zero reward. 
The update is akin to transforming an episodic problem into an equivalent, hypothetical continuing problem—a setting where constant shifts in reward lead to constant shifts in return. In this view, the infinite discounted shifts in the self-looping state are summarized with a closed-form expression. We note, however, that this episodic-to-continuing transformation is from the \textit{perspective of the value function}, as an agent still resets to a starting state and does not perform updates in the terminal state. 

Reward shaping also has an equivalence with value-function initialization \citep{wiewiora2003shapinginit}, suggesting that it can influence exploration via means like optimistic initialization \citep{sun2022shiftexplore}. The relationship between reward shaping and value-function initialization provides insight as to why we might expect centering to improve sample efficiency. It is akin to initializing a value function to its mean and reducing the distance that each state- or action-value has to travel. It is not an exact equivalence here, as $b$ changes over time \citep{devlin2012dynamicpbrs}. However, because we are estimating a single scalar, it is a relatively simple learning problem.

Because the modification is equivalent to defining a terminal differential value, formally this is a generalization of differential TD as the algorithm previously did not intend to encounter termination. However, because of the division by $1 - \gamma$ in the terminal differential value, the above modification does not apply to undiscounted, episodic problems. 

\section{Learning episodic differential values}
\label{sec:bias}

The previous section detailed how optimal policy invariance can be maintained when centering rewards in episodic problems. However, the reward shaping perspective assumes the potential function is fixed and does not suggest if the algorithm is sound if $b$ is continually updated. To reconcile this, we view differential TD as learning values where the value function has an output-level bias unit that is independent of state and action. To establish this equivalence, we define a value function to be the sum of differential values (parameterized by $\w$) and a \textit{bias unit} $b$:
\begin{equation*}
    V(s;\w,b) \defeq V^\Delta(s;\w) + b.
\end{equation*}
With a mean-squared-value-error objective and (sample-based) gradient-descent updates (e.g., \citealp{sutton2018rlbook}; \citealp{mnih2015dqn}), we get:
\begin{gather*}
    J(\w,b) \defeq \frac{1}{2}\sum_{s}{d(s) \big(v_\pi(s) - V(s;\w,b)\big)^2} \\
    \w_{t+1} \leftarrow \w_t + \alpha \big( v_\pi - V(S_t;\w_t,b_t) \big) \nabla_\w V^\Delta(S_t;\w_t) \\
    b_{t+1} \leftarrow b_t + \eta\alpha \big( v_\pi - V(S_t;\w_t,b_t) \big),
\end{gather*}
where---to emphasize the relationship with differential TD---we again specify $\eta \in [0,1]$ to produce a slower time-scale, effective step-size of $\eta\alpha$ for the bias unit update.
Substituting a TD estimate of $v_\pi$ then gives us the following error:
\begin{align*}
    v_\pi - V(S_t;\w_t,b_t) &= R_{t+1} + \gamma V(S_{t+1};\w_t,b_t) - V(S_t;\w_t,b_t) \\
    &= R_{t+1} + \gamma V^\Delta(S_{t+1};\w_t) + \gamma b_t - V^\Delta(S_t;\w_t) - b_t \\
    &= R_{t+1} - (1-\gamma)b_t + \gamma V^\Delta(S_{t+1};\w_t) - V^\Delta(S_t;\w_t)
\end{align*}
This resembles, but does not completely match differential TD as defined by Equation \ref{equ:differential-td} in that it has an extra $\gamma b_t$ term. However, because the additional term only involves a state- and action-independent scalar, we can use reparameterization to show that this update is equivalent to differential TD if we allow the bias to use a separate step size (as is the case with differential TD). Define $\hat{b} \defeq (1-\gamma)b$ and $\hat{\eta} \defeq \eta(1-\gamma)$:
\begin{align*}
    \w_{t+1} &\leftarrow \w_t + \alpha \big( R_{t+1} - \hat{b}_t + \gamma V^\Delta(S_{t+1};\w_t) - V^\Delta(S_t;\w_t)\big) \nabla_\w V^\Delta(S_t;\w_t) \\
    b_{t+1} &\leftarrow b_t + \eta\alpha \big( R_{t+1} - \hat{b}_t + \gamma V^\Delta(S_{t+1};\w_t) - V^\Delta(S_t;\w_t)\big) \frac{\partial \hat{b}_t}{\partial b_t} \\
    \Leftrightarrow b_{t+1} &\leftarrow b_t + \eta\alpha \big( R_{t+1} - \hat{b}_t + \gamma V^\Delta(S_{t+1};\w_t) - V^\Delta(S_t;\w_t)\big) (1 - \gamma) \\
    \Leftrightarrow b_{t+1} &\leftarrow b_t + \hat{\eta}\alpha \big( R_{t+1} - \hat{b}_t + \gamma V^\Delta(S_{t+1};\w_t) - V^\Delta(S_t;\w_t)\big)
\end{align*}
It is evident that if we set $\eta$ and initialize $b_0$ appropriately, and we perform updates on the same sequence of transitions, the updates to $\w$ exactly match those of differential TD. The bias-unit step-size can also be treated as the bias unit’s activation value. This interpretation establishes an equivalence with a specific choice of feature representation, and as a result, the analysis of linear TD with discounting (or eventual termination) extends toward differential TD in episodic problems. See Appendix \ref{appendix:convergence} for a complete proof of convergence which validates this.

The presence of the additional $\gamma b_t$ term prior to reparameterization results in bootstrapping off of uncentered values (i.e., $V$ and not $V^\Delta$). This allows us to define the sum $V^\Delta(S_T;\w) + b \defeq 0$ (or $V^\Delta(S_T;\w)\defeq -b$) to handle terminal states, which is what we get if we substitute $\hat{b}$ into the terminal differential value definition from Section \ref{sec:pbrs}. This highlights that the additional $\gamma b_t$ is what follows from a potential function $\Phi(s)=b$. While the two forms are equivalent through the separate step size, it is notably a form which is applicable in episodic problems with $\gamma=1$. As an example, Algorithm \ref{alg:diffqlearning} details how we can extend differential Q-learning to handle episodic problems. It provides two forms of the update which, when $\gamma = 1$, must be selected based on whether the problem is known to be continuing or episodic. Either form is applicable when $\gamma < 1$, and as shown above, are formally equivalent under corresponding parameter settings.


It is less informative to consider average reward in episodic problems because any policy which eventually terminates has zero average reward when viewing terminal states as an infinite, zero reward self-loop. 
Because updates are not performed to the values of terminal states, the differential values are centered over non-terminal states, making $b$ approach the expected state-value over the (non-terminal) visitation distribution, subdivided over the expected remaining episode length: $\mathbb{E}_{s\sim d_\pi}[V(s)\frac{1 - \gamma}{1-\gamma^{T(s)}}]$, where $d_\pi$ represents normalized expected state visitation counts under policy $\pi$ and $T(s)$ is the expected remaining episode length from state $s$.


We emphasize that this parameterization differs from simply adding a bias unit to a neural network's final (linear) layer. In multi-output architectures like DQN, linear layers assign independent bias weights to each output, whereas differential TD will use a single offset across the entire value function. This distinction highlights $\eta$ as a mechanism for balancing this structural credit assignment, which---depending on the problem---\textit{may not require a slower time scale} (i.e., $\eta \geq 1$).

\begin{algorithm}[H]\small
\caption{\small (Generalized) Differential Q-learning}
\label{alg:diffqlearning}
\begin{algorithmic}
\State $\textrm{Initialize weights }\w\in\mathbb{R}^d\textrm{ arbitrarily}$
\State $\textrm{Initialize  }b\in\mathbb{R}\textrm{ arbitrarily}$
\For{\textrm{each episode}}
    \State $s \sim p(s_0)$
    \For{\textrm{each step of episode}}
        \State $a \sim \pi(\cdot|s)$
        \State $s', r \sim p(s', r|s, a)$
        \If{$s'$ is terminal}
            \State {\color{tab_blue}$\delta \leftarrow r-\frac{b}{1-\gamma}-Q^\Delta(s, a;\w)$ \Comment{ $\gamma < 1$}}
            \State {\color{tab_orange}$\delta \leftarrow r-b-Q^\Delta(s, a;\w)$ \Comment{ $\gamma < 1$ or $\gamma = 1$, episodic}}
        \Else
            \State {\color{tab_blue}$\delta \leftarrow r-b+\gamma \max_{a'}{Q^\Delta(s',a';\w)}-Q^\Delta(s, a;\w)$ \Comment{ $\gamma < 1$ or $\gamma = 1$, continuing}}
            \State {\color{tab_orange}$\delta \leftarrow r-(1-\gamma)b+\gamma \max_{a'}{Q^\Delta(s',a';\w)}-Q^\Delta(s, a;\w)$ \Comment{ $\gamma < 1$ or $\gamma = 1$, episodic}}
        \EndIf
        \State $\w \leftarrow \alpha \delta \nabla_{w}Q^\Delta(s, a;\w)$
        \State $b \leftarrow \eta\alpha\delta$
        \State $s\leftarrow s'$
    \EndFor
\EndFor
\end{algorithmic}
\end{algorithm}

\vspace{-0.5cm}
\section{Empirical Evaluation}

To see whether the benefits of centering in continuing problems \citep{naik2024centering} can be achieved in episodic problems, we consider differential Q-learning \citep{watkins1989qlearning} with our novel terminal differential value definition (Algorithm \ref{alg:diffqlearning}) and compare it with vanilla, uncentered Q-learning in a 10$\times$10 episodic grid world where the top-left is the start state and the bottom-right is terminal. The grid world uses 4-directional movement where attempting to leave the grid keeps the agent in place. To gain insight into when centering is useful, we consider two reward distributions: $-1$ per step (the \textit{painful} grid world) and $0$ per step with $1$ upon termination (the \textit{sparse} grid world). Based on the intuition around centering reducing the total distance that outputs need to travel, our hypothesis is that centering will provide substantially more benefit in the painful grid world, since the values deviate more across states. We fixed $\gamma = 0.9$, tuned $\alpha$ for Q-learning, and we tuned $\alpha$ and $\eta$ for differential Q-learning. An $\epsilon$-greedy policy was used for both algorithms with $\epsilon=0.1$. Full details of the parameter sweeps can be found in Appendix \ref{appendix:gwhps}.

\begin{figure}[!ht]
    \centering
    \begin{subfigure}[b]{0.32\linewidth}
        \centering
        \includegraphics[width=\linewidth]{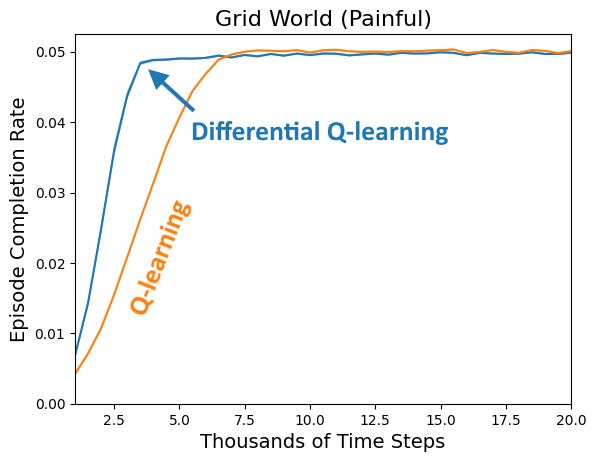}
        \caption{Painful reward distribution}
    \end{subfigure}
    \begin{subfigure}[b]{0.32\linewidth}
        \centering
        \includegraphics[width=\linewidth]{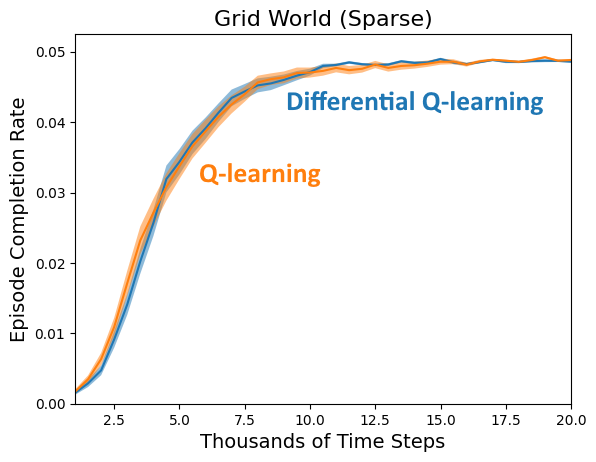}
        \caption{Sparse reward distribution}
    \end{subfigure}
    \caption{Performance of Q-learning when used with reward- and value-centering compared against a standard uncentered baseline. The results are averaged over 100 independent runs where the shaded areas (occasionally less than a line width) represent the standard error.}
    \label{fig:gridworlds}
\end{figure}

Figure \ref{fig:gridworlds} shows the average rate of completed episodes per environment step of each algorithm's best parameter setting in terms of total episodes completed, for each reward distribution. With the painful reward distribution, differential Q-learning improves significantly over the uncentered baseline. However, in the sparse reward variant, both algorithms performed similarly. Recognizing that both algorithms performed worse with sparse rewards, it is possible that learning was bottlenecked by having a comparatively difficult exploration problem. Nevertheless, this validates that there is benefit to centering in episodic problems. The results further suggest that the benefit can be expected when there is greater value-deviation across states (typical of dense reward settings), consistent with the intuition of centering reducing the distance that outputs need to travel. 


To further validate that centering can be done in episodic problems without changing the underlying problem and to demonstrate that there is benefit in doing so, we examine more challenging environments that require non-linear function approximation. Specifically, we extend two streaming deep reinforcement learning algorithms: Stream Q($\lambda$) and Stream AC($\lambda$) \citep{elsayed2024streamdrl} to their differential counterparts. We additionally compare against PopArt \citep{vanhasselt2016popart}---an algorithm which similarly employs output centering but with explicit attention on precisely preserving the unnormalized outputs. When using PopArt normalization with Stream Q($\lambda$) or Stream AC($\lambda$), we omit reward scaling as PopArt performs its own output scaling. We provide further experimental details and hyperparameters used in Appendix \ref{appendix:details-sdrl}.

Figure \ref{fig:minatar} shows the performance of Stream Q($\lambda$), differential Stream Q($\lambda$), and PopArt Stream Q($\lambda$) on Asterix, Breakout, Freeway, Seaquest, and SpaceInvaders from the MinAtar suite \citep{young2019minatar}. We tune $\eta$ and present results under the best-performing parameters from our search. We observe that differential Stream Q($\lambda$) improves over its uncentered base algorithm in all environments except for Breakout, where they perform similarly. On the other hand, PopArt normalization with Stream Q($\lambda$) was less consistent across the environments. It is unclear why this is the case because PopArt had not previously been demonstrated in a streaming deep reinforcement learning setup, and had not been used in these environments. It may be a nuance around explicitly normalizing the outputs with specific statistics and trying to precisely preserve outputs as these statistics may shift, in contrast with differential TD which jointly optimizes the shift under the same objective.

\begin{figure}[!ht]
    \centering
    \begin{subfigure}[b]{0.3\linewidth}
        \centering
        \includegraphics[width=\linewidth]{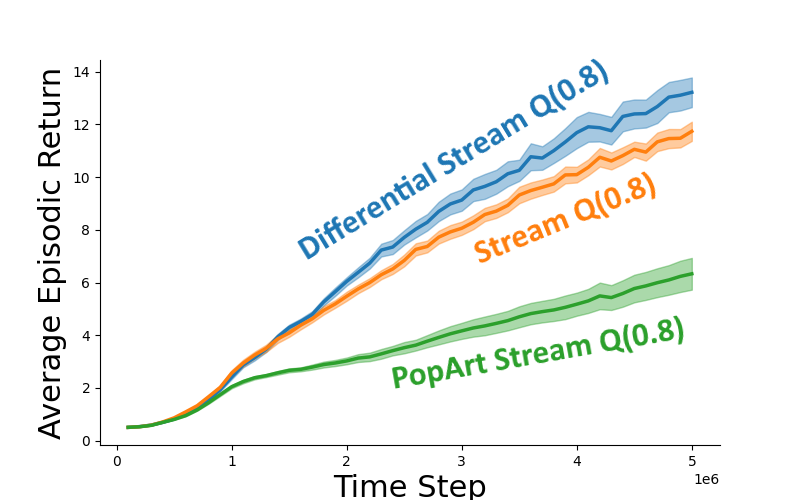}
        \caption{Asterix-v1}
    \end{subfigure}
    \begin{subfigure}[b]{0.3\linewidth}
        \centering
        \includegraphics[width=\linewidth]{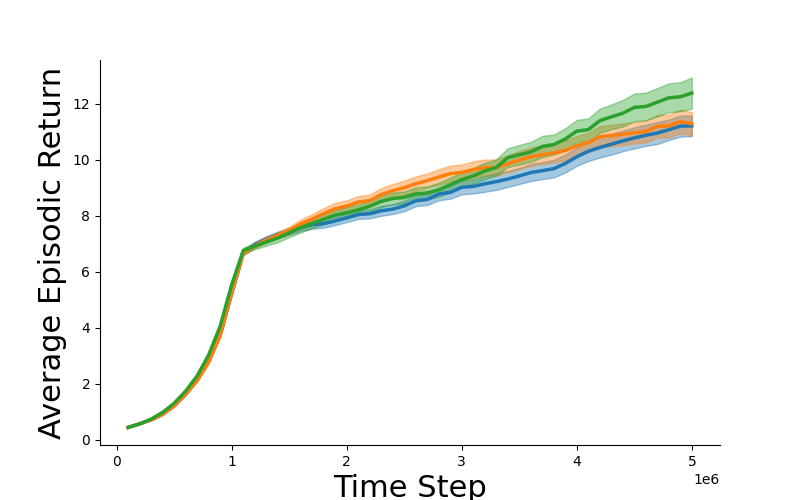}
        \caption{Breakout-v1}
    \end{subfigure}
    \begin{subfigure}[b]{0.3\linewidth}
        \centering
        \includegraphics[width=\linewidth]{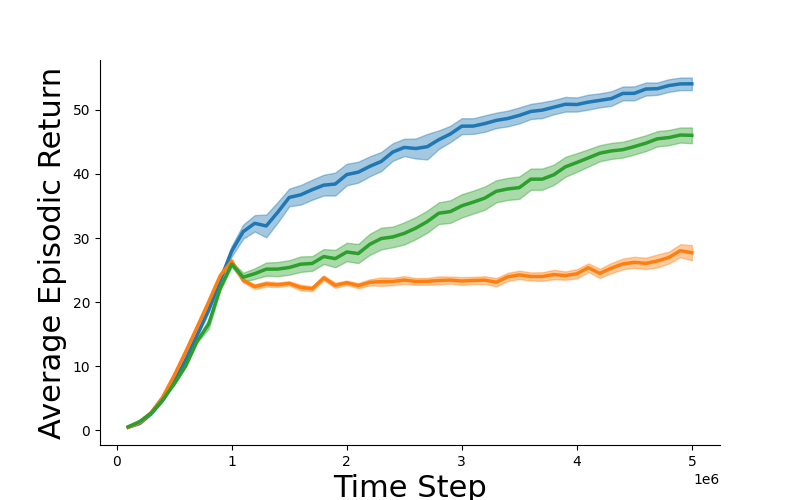}
        \caption{Freeway-v1}
    \end{subfigure}
    \begin{subfigure}[b]{0.3\linewidth}
        \centering
        \includegraphics[width=\linewidth]{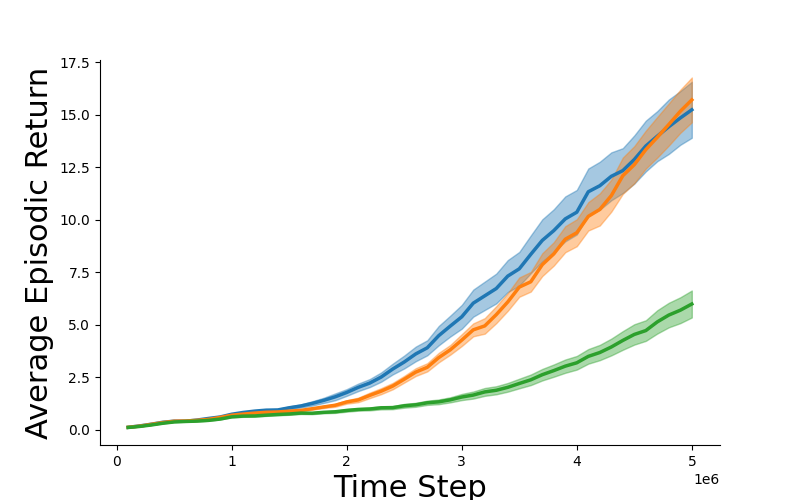}
        \caption{Seaquest-v1}
    \end{subfigure}
    \begin{subfigure}[b]{0.3\linewidth}
        \centering
        \includegraphics[width=\linewidth]{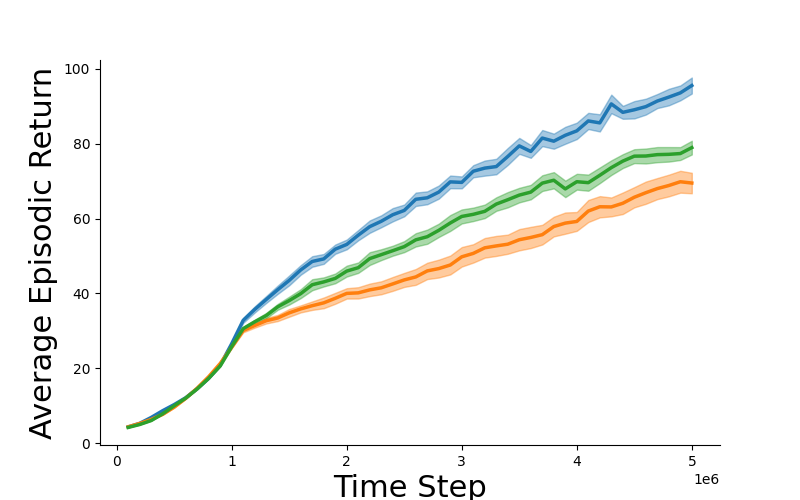}
        \caption{SpaceInvaders-v1}
    \end{subfigure}
    \caption{Performance of differential Stream Q(0.8) compared against a standard uncentered baseline and a PopArt-normalized baseline in the MinAtar suite. The results are averaged over 30 independent runs where the shaded areas represent the standard error.}
    \label{fig:minatar}
\end{figure}

Next, we compare our centering approach in continuous-action control. In Figure \ref{fig:mujoco}, we show the performance of Stream AC($\lambda$), differential Stream AC($\lambda$), and PopArt Stream AC($\lambda$) in the MuJoCo suite \citep{todorov2012mujoco}. It can be observed in Figure \ref{fig:mujoco} that differential Stream AC($\lambda$) showed considerable improvement in the Ant-v4 and HalfCheetah-v4 environments, while not performing worse than its uncentered counterpart in the remaining ones. Notably, these two environments saw the largest return magnitudes over the duration of a run, which may be related to large value deviations across states. PopArt did not demonstrate statistically significant improvement in this suite.

\begin{figure}[!ht]
    \centering
    \begin{subfigure}[b]{0.3\linewidth}
        \centering
        \includegraphics[width=\linewidth]{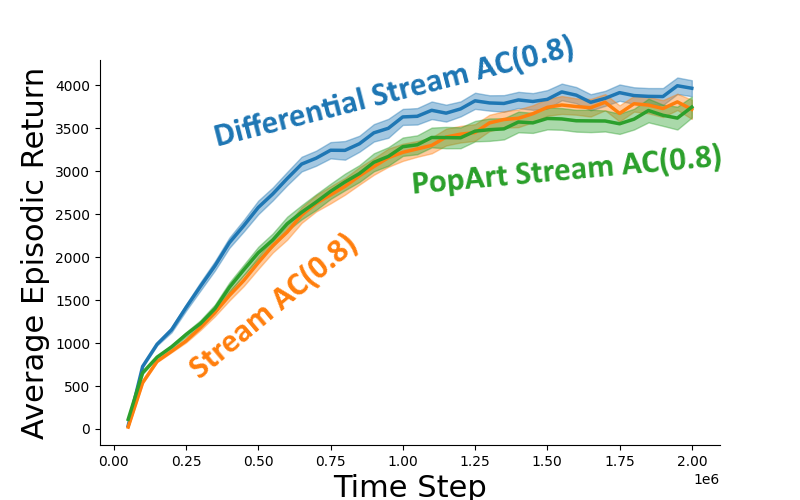}
        \caption{Ant-v4}
    \end{subfigure}
    \begin{subfigure}[b]{0.3\linewidth}
        \centering
        \includegraphics[width=\linewidth]{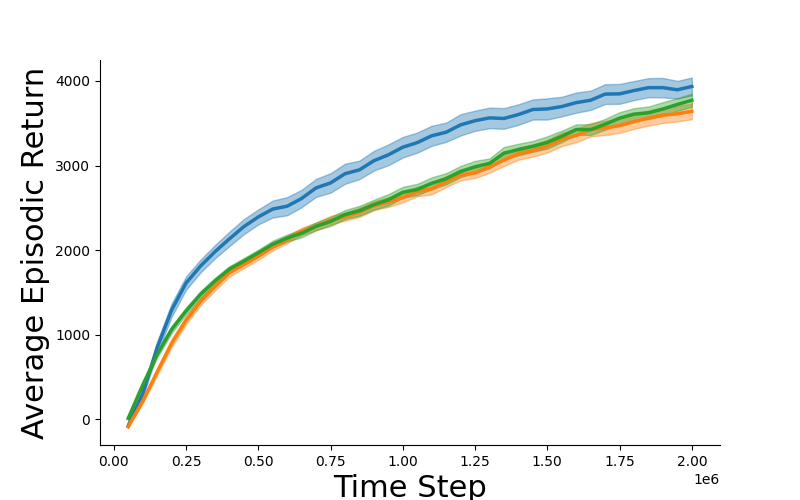}
        \caption{HalfCheetah-v4}
    \end{subfigure}
    \begin{subfigure}[b]{0.3\linewidth}
        \centering
        \includegraphics[width=\linewidth]{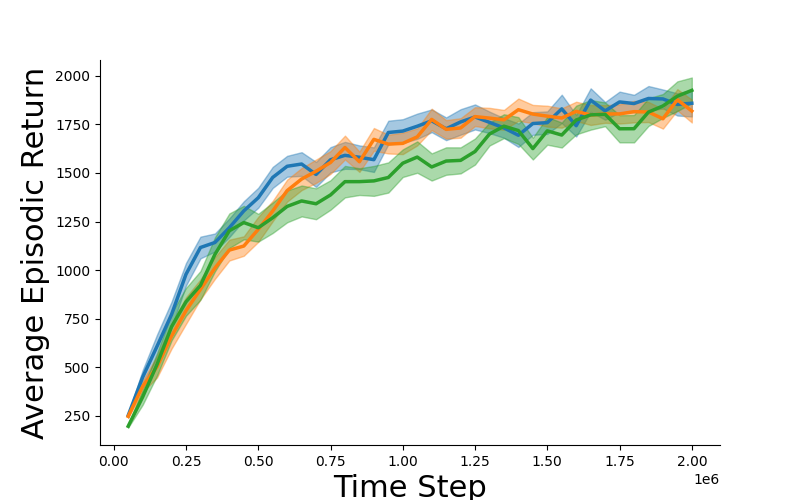}
        \caption{Hopper-v4}
    \end{subfigure}
    \begin{subfigure}[b]{0.3\linewidth}
        \centering
        \includegraphics[width=\linewidth]{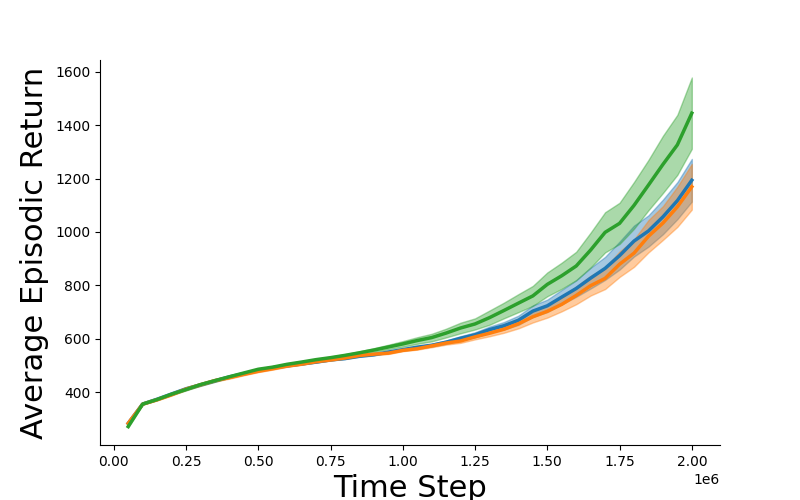}
        \caption{Humanoid-v4}
    \end{subfigure}
    \begin{subfigure}[b]{0.3\linewidth}
        \centering
        \includegraphics[width=\linewidth]{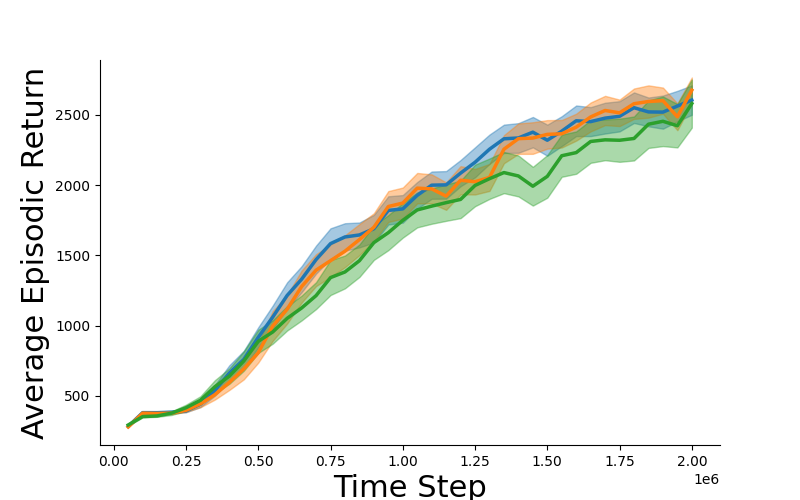}
        \caption{Walker2d-v4}
    \end{subfigure}
    \caption{Performance of differential Stream AC(0.8) compared against a standard uncentered baseline and a PopArt-normalized baseline in the MuJoCo suite. The results are averaged over 30 independent runs where the shaded areas represent the standard error.}
    \label{fig:mujoco}
\end{figure}
Lastly, to explicitly validate the insight from the grid world experiments on when differential TD helps, we modified the Deepmind Control Suite's Reacher environment \cite{tassa2018deepmind}. Specifically, we created a \textit{Painful} Reacher environment that receives a reward of $-1$ per step to mirror the grid world set up that showed substantial benefit. To lengthen episode duration and consequently increase value magnitudes and deviation, we additionally evaluate in a harder variant of the task that shrinks the goal location. With results presented in Figure \ref{fig:reacher}, we see significant improvement in using differential Stream AC($\lambda$) over the uncentered base algorithm. Taking all of the evaluation together, we have established that reward centering can be done in episodic problems and that it can improve sample efficiency over uncentered algorithms. We further observed that the differential extension, when tuned, never performed worse than its base algorithm. This is to be expected because the $\eta = 0$ extreme results in a standard, uncentered TD update.

\begin{figure}[!ht]
    \centering
    \begin{subfigure}[b]{0.39\linewidth}
        \centering
        \includegraphics[width=\linewidth]{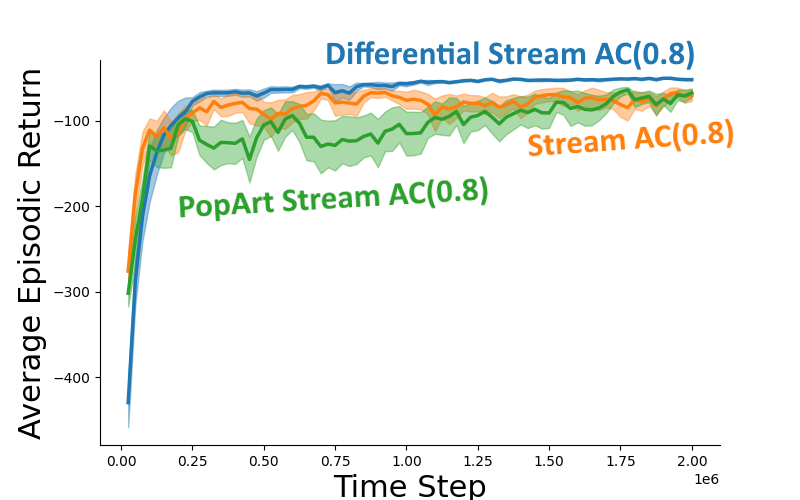}
        \caption{Painful Reacher (Easy)}
    \end{subfigure}
    \begin{subfigure}[b]{0.39\linewidth}
        \centering
        \includegraphics[width=\linewidth]{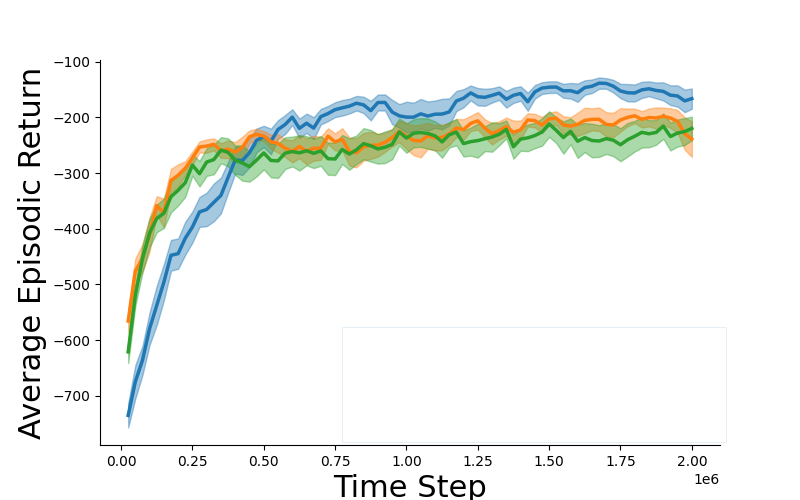}
        \caption{Painful Reacher (Hard)}
    \end{subfigure}
    \caption{Performance of differential Stream AC(0.8) compared against a standard uncentered baseline and a PopArt-normalized baseline in the Painful Reacher environment. The results are averaged over 30 independent runs where the shaded areas represent the standard error.}
    \label{fig:reacher}
\end{figure}

\section{Conclusions, Discussion, and Future Work}

In this work, we explored the reward centering mechanism of differential TD algorithms, which was previously limited to infinite-horizon reinforcement learning problems. By viewing reward centering from the lens of potential-based reward shaping, we propose a differential terminal value definition which---when used---maintains the ordering of policies and strictly generalizes differential TD to be applicable in episodic problems. We further show equivalence between the generalized differential TD update and an output-level, state- and action-independent bias unit. This establishes that the algorithm shares the theoretical guarantees previously shown for linear TD, and provides insight into how the centering term can be interpreted in an episodic problem. In a tabular environment, we demonstrated that centering can improve sample efficiency in episodic problems and provided arguments for when such benefits might be expected. In a streaming deep reinforcement learning setup, we further showed that these algorithms can scale to difficult problems with non-linear function approximation. Altogether, we have shown that reward centering can be applied in the presence of termination without altering the underlying task, and that doing so is beneficial.

There are many avenues for future work. Our evaluation focused on the streaming reinforcement learning setting, as that is where normalization was recently shown to have substantial benefit. However---as emphasized by \cite{naik2024centering}---reward centering is a relatively general idea that can be easily dropped into any existing algorithm. Broadening its applicability toward episodic environments, the scope of possible comparisons between algorithms is larger now and there is merit in investigating differential TD's utility toward other types of episodic reinforcement learning algorithms (e.g., ones which store and process explicit episode trajectories). While the additional step-size parameter $\eta$ was already present in the original differential TD algorithms, the additional overhead in tuning this parameter remains a limitation. Given that centering involves learning a single scalar---a seemingly simple learning problem---it would be promising to explore whether $\eta$ can be efficiently meta-learned (e.g., \citealp{sutton1992idbd}; \citealp{mahmood2012autostep}; \citealp{sharifnassab2024metaoptimize}).

\appendix

\section{Experimental details of grid world experiments}
\label{appendix:gwhps}
We swept over $\alpha\in\{0.1,0.2,0.3,0.4,0.5,0.6,0.7,0.8,0.9,1.0\}$ for both algorithms and $\eta\in\{10^{-4}, 10^{-3.5}, 10^{-3}, 10^{-2.5}, 10^{-2}, 10^{-1.5}, 10^{-1}, 10^{-0.5}, 10^0\}$ for differential Q-learning. In the painful grid world, $\alpha=1.0$ was best for both algorithms, with $\eta=10^{-3}$ performing best for differential Q-learning. In the sparse grid world, $\alpha=0.9$ was best for both algorithms, with $\eta=10^{-4}$ performing best for differential Q-learning.

\section{Experimental details of streaming deep RL experiments}

\label{appendix:details-sdrl}
We swept over bias step-sizes $\eta\in\{ 10^{-2}, 10^{-1}, 10^{0}, 10^{1}, 10^{2}, 10^{3}\}$, and we show performance with the best-performing value. For Stream AC, we used step-size $\alpha=1$, $\kappa_\pi=3$, $\kappa_v=2$, $\lambda=0.8$, discount factor $\gamma=0.99$, and entropy coefficient $\tau=0.01$. For Stream Q, we used step-size $\alpha=1$, $\kappa_v=2$, $\lambda=0.8$, discount factor $\gamma=0.99$. We used the same neural network architectures used with Stream AC and Stream Q reported by \cite{elsayed2024streamdrl}. Lastly, we used an $\epsilon$-greedy policy where $\epsilon$ linearly decayed from $1$ to $0.01$ within $20\%$ of the total time steps of a run.

\section{Convergence of episodic differential TD}
\label{appendix:convergence}

\newtheorem{theorem}{Theorem}
\newtheorem{lemma}{Lemma}
\newtheorem{definition}{Definition}
\newtheorem{assumption}{Assumption} 

In this section, we analyze the asymptotic convergence of differential TD with discounting. We focus on linear function approximation, $V(s) = \boldsymbol{\phi}(s)^\top \boldsymbol{w}$, which subsumes the tabular case.

We adopt the Ordinary Differential Equation (ODE) method for stochastic approximation \citep{borkar2008stochastic}. We first define the update rules and the expanded parameter space. Crucially, we utilize an \emph{unrolled MDP} formulation to unify the analysis of continuing and episodic tasks.

\subsection{Update Rules and Expanded Features}

Differential TD maintains a parameter vector $\boldsymbol{w}$ and a separate scalar estimate $b$. The update for a transition $(S_t, A_t, R_{t+1}, S_{t+1})$ is given by:
\begin{align}
\label{eq:weight_update_separate}
    \boldsymbol{w}_{t+1} &= \boldsymbol{w}_t + \alpha_t \delta_t \boldsymbol{\phi}(S_t), \\
    b_{t+1} &= b_t + \eta\alpha_t \delta_t.
\end{align}
Here, $\alpha_t$ is the learning rate and $\eta > 0$ is a scalar multiplier for the bias learning rate. Based on Section \ref{sec:bias}, The TD error $\delta_t$ is defined as:
\begin{itemize}
    \item \textbf{Continuing:} $\delta_t = R_{t+1} - (1-\gamma)b_t + \gamma \boldsymbol{\phi}(S_{t+1})^\top \boldsymbol{w}_t - \boldsymbol{\phi}(S_t)^\top \boldsymbol{w}_t$.
    \item \textbf{Episodic (Non-terminal):} Same as above.
    \item \textbf{Episodic (Terminal):} $\delta_t = R_{t+1} - b_t - \boldsymbol{\phi}(S_t)^\top \boldsymbol{w}_t$.
\end{itemize}

To analyze this coupled system, we augment the feature vector to include a bias unit, forming the expanded feature vector $\tilde{\boldsymbol{\phi}}(s) = [\mathbb{I}(s), \boldsymbol{\phi}(s)^\top]^\top \in \mathbb{R}^{d+1}$. The corresponding parameter vector is $\tilde{\boldsymbol{w}} = [b, \boldsymbol{w}^\top]^\top$. 
Here, $\mathbb{I}(s)$ acts as the bias feature: it is $1$ for all states in the continuing setting, and $[\mathbf{e}]_s$ (indicator of non-terminal status) in the episodic setting.

The updates can be rewritten in a unified form:
\begin{equation}
\label{eq:weight_update}
    \tilde{\boldsymbol{w}}_{t+1} = \tilde{\boldsymbol{w}}_t + \alpha_t \delta_t \boldsymbol{K} \tilde{\boldsymbol{\phi}}(S_t),
\end{equation}
where $\boldsymbol{K} = \text{diag}(\eta, 1, \dots, 1)$ handles the separate step sizes, and the TD error simplifies to:
\begin{equation}
    \delta_t = R_{t+1} + \gamma \tilde{\boldsymbol{\phi}}(S_{t+1})^\top \tilde{\boldsymbol{w}}_t - \tilde{\boldsymbol{\phi}}(S_t)^\top \tilde{\boldsymbol{w}}_t.
\end{equation}
In the episodic case, we define $\tilde{\boldsymbol{\phi}}(S_T) \equiv \mathbf{0}$.

\subsection{Assumptions and Unrolled MDP}

To provide a single convergence proof, we model the episodic setting as a continuing process and formalize our assumptions.

\begin{definition}[Unrolled MDP]
For an episodic task, the Unrolled MDP is a continuing Markov chain that treats a sequence of episodes as a single stream. Upon reaching a terminal state $s_T$, the process transitions at the next time step to a start state $s_0$ sampled from an initial distribution $d_0$.
\end{definition}

We define the unified transition matrix $\boldsymbol{P}_\pi$ and stationary distribution matrix $\boldsymbol{D}_\pi$ (diagonal matrix of the stationary distribution $d_\pi$) based on this unrolled view.

\begin{assumption}[Ergodicity]
\label{ass:ergodicity}
The Markov chain (or Unrolled MDP in the episodic case) induced by policy $\pi$ is ergodic (irreducible and aperiodic), admitting a unique stationary distribution $d_\pi$.
\end{assumption}

\begin{assumption}[Linearly Independent Features]
\label{ass:features}
The expanded features $\tilde{\boldsymbol{\Phi}}$ has full column rank.
\begin{itemize}
    \item Continuing: $\tilde{\boldsymbol{\Phi}} = [\mathbf{1}, \boldsymbol{\Phi}]$.
    \item Episodic: $\tilde{\boldsymbol{\Phi}} = [\mathbf{e}, \boldsymbol{\Phi}]$, where $\mathbf{e}$ is zero for terminal states.
\end{itemize}
Furthermore, for all $s$, $\|\boldsymbol{\phi}(s)\| < \infty$.
\end{assumption}

\begin{assumption}[Step Sizes and Noise]
\label{ass:stepsize}
The step sizes $\alpha_t$ satisfy the standard Robbins-Monro conditions: $\sum_{t=0}^\infty \alpha_t = \infty$ and $\sum_{t=0}^\infty \alpha_t^2 < \infty$. The reward function has bounded variance.
\end{assumption}

\subsection{Convergence Analysis}

The behavior of the stochastic update in Equation \ref{eq:weight_update} is governed by the mean field ODE:
\begin{equation}
\label{eq:ode}
    \dot{\tilde{\boldsymbol{w}}} = \boldsymbol{K} (\tilde{\boldsymbol{A}} \tilde{\boldsymbol{w}} + \tilde{\boldsymbol{b}}),
\end{equation}
where $\tilde{\boldsymbol{b}} = \mathbb{E}[R_{t+1}\tilde{\boldsymbol{\phi}}(S_t)]$ and $\tilde{\boldsymbol{A}}$ is the expected update direction matrix:
\begin{equation}
    \tilde{\boldsymbol{A}} = \tilde{\boldsymbol{\Phi}}^\top \boldsymbol{D}_\pi (\gamma \boldsymbol{P}_\pi - \boldsymbol{I}) \tilde{\boldsymbol{\Phi}}.
\end{equation}

\begin{theorem}
\label{thm:convergence}
Let $\gamma < 1$ and $\eta > 0$. Under Assumptions \ref{ass:ergodicity}, \ref{ass:features}, and \ref{ass:stepsize}, the parameter $\tilde{\boldsymbol{w}}_t$ converges with probability 1 to the unique fixed point $\tilde{\boldsymbol{w}}^* = -\tilde{\boldsymbol{A}}^{-1}\tilde{\boldsymbol{b}}$.
\end{theorem}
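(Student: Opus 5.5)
The proof applies the ODE method of \citet{borkar2008stochastic} to the unified recursion in Equation~\ref{eq:weight_update}. It has three steps: show that the mean-field ODE in Equation~\ref{eq:ode} is globally asymptotically stable at $\tilde{\boldsymbol{w}}^*$, check the standard stochastic-approximation conditions (Lipschitz mean field, martingale-difference noise, Markov noise from an ergodic chain), and use stability of the scaled ODE $\dot{\tilde{\boldsymbol{w}}} = \boldsymbol{K}\tilde{\boldsymbol{A}}\tilde{\boldsymbol{w}}$ to get almost-sure boundedness of the iterates (Borkar--Meyn). Almost-sure convergence to the unique equilibrium then follows. Steps two and three are routine for linear TD. The expanded features are bounded by Assumption~\ref{ass:features}, and in the episodic case $\tilde{\boldsymbol{\phi}}(S_T)=\mathbf{0}$, so every update is affine in $\tilde{\boldsymbol{w}}$ with bounded coefficients. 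Rewards have bounded variance, and the unrolled chain is a finite ergodic chain by Assumption~\ref{ass:ergodicity}. I would handle the Markov noise with the Poisson-equation argument as in Tsitsiklis and Van Roy.

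The first step is to show that $\tilde{\boldsymbol{A}}$ is negative definite, i.e.\ $\boldsymbol{x}^\top\tilde{\boldsymbol{A}}\boldsymbol{x}<0$ for $\boldsymbol{x}\neq 0$. Set $\boldsymbol{y}=\tilde{\boldsymbol{\Phi}}\boldsymbol{x}$; full column rank makes $\boldsymbol{y}\neq 0$. Then $\boldsymbol{x}^\top\tilde{\boldsymbol{A}}\boldsymbol{x} = \gamma\,\boldsymbol{y}^\top\boldsymbol{D}_\pi\boldsymbol{P}_\pi\boldsymbol{y} - \|\boldsymbol{y}\|_{D}^2$. In the continuing case, stationarity ($d_\pi^\top\boldsymbol{P}_\pi=d_\pi^\top$) and Jensen give $\|\boldsymbol{P}_\pi\boldsymbol{y}\|_D\le\|\boldsymbol{y}\|_D$. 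Cauchy--Schwarz then yields the bound $-(1-\gamma)\|\boldsymbol{y}\|_D^2<0$, since $D$ has full support under ergodicity.

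The episodic case is the delicate point. In the unrolled chain the terminal state transitions to $d_0$, but the TD target bootstraps from $\tilde{\boldsymbol{\phi}}(S_T)=\mathbf{0}$ rather than from $S_{T+1}$. So the relevant matrix is not $\boldsymbol{P}_\pi$ itself. It is a substochastic $\boldsymbol{P}'_\pi$ with the rows out of terminal states zeroed, or equivalently the effective target uses $\boldsymbol{y}$ restricted to non-terminal successors. I would write $\tilde{\boldsymbol{A}}=\tilde{\boldsymbol{\Phi}}^\top\boldsymbol{D}_\pi(\gamma\boldsymbol{P}'_\pi-\boldsymbol{I})\tilde{\boldsymbol{\Phi}}$. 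Since $\boldsymbol{P}'_\pi\le\boldsymbol{P}_\pi$ entrywise, Jensen still gives $\|\boldsymbol{P}'_\pi\boldsymbol{y}\|_D^2\le\sum_s d(s)\sum_{s'}P_\pi(s,s')y(s')^2=\|\boldsymbol{y}\|_D^2$. The contraction argument therefore goes through with $\gamma<1$ alone, and the $\gamma=1$ case would need the termination structure, which the theorem excludes.

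Negative definiteness makes $\tilde{\boldsymbol{A}}$ invertible, so $\tilde{\boldsymbol{w}}^*=-\tilde{\boldsymbol{A}}^{-1}\tilde{\boldsymbol{b}}$ is the unique equilibrium. The remaining obstacle is the diagonal scaling $\boldsymbol{K}$, because $\boldsymbol{K}\tilde{\boldsymbol{A}}$ need not be negative definite in the Euclidean inner product. I would use the Lyapunov function $L(\tilde{\boldsymbol{w}})=\tfrac12(\tilde{\boldsymbol{w}}-\tilde{\boldsymbol{w}}^*)^\top\boldsymbol{K}^{-1}(\tilde{\boldsymbol{w}}-\tilde{\boldsymbol{w}}^*)$, which is valid since $\eta>0$ makes $\boldsymbol{K}\succ0$. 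Its derivative along the ODE is $\dot L=(\tilde{\boldsymbol{w}}-\tilde{\boldsymbol{w}}^*)^\top\tilde{\boldsymbol{A}}(\tilde{\boldsymbol{w}}-\tilde{\boldsymbol{w}}^*)<0$ away from $\tilde{\boldsymbol{w}}^*$. This gives global asymptotic stability of both the ODE and its scaled limit. Combined with the noise conditions, Borkar's theorem gives $\tilde{\boldsymbol{w}}_t\to\tilde{\boldsymbol{w}}^*$ with probability 1.
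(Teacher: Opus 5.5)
Your proposal is correct and follows the paper's proof. Both establish negative definiteness of $\tilde{\boldsymbol{A}}$ from the $\boldsymbol{D}_\pi$-norm non-expansion plus Cauchy--Schwarz, and then stability of $\boldsymbol{K}\tilde{\boldsymbol{A}}$. Your Lyapunov function $\tfrac12(\tilde{\boldsymbol{w}}-\tilde{\boldsymbol{w}}^*)^\top\boldsymbol{K}^{-1}(\tilde{\boldsymbol{w}}-\tilde{\boldsymbol{w}}^*)$ is the same change of coordinates as the paper's similarity transform $\boldsymbol{K}^{-1/2}(\boldsymbol{K}\tilde{\boldsymbol{A}})\boldsymbol{K}^{1/2}$. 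You also spell out the noise and boundedness conditions that the paper leaves to ``standard theory.''

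The substochastic $\boldsymbol{P}'_\pi$ detour is harmless but unnecessary. The rows of $\tilde{\boldsymbol{\Phi}}$ at terminal states are zero, so $\tilde{\boldsymbol{\Phi}}^\top\boldsymbol{D}_\pi\boldsymbol{P}'_\pi\tilde{\boldsymbol{\Phi}}=\tilde{\boldsymbol{\Phi}}^\top\boldsymbol{D}_\pi\boldsymbol{P}_\pi\tilde{\boldsymbol{\Phi}}$, and the paper's $\tilde{\boldsymbol{A}}$ built from the unrolled $\boldsymbol{P}_\pi$ is already exact.
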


\begin{proof}
The proof proceeds in two steps. First, we show that $\tilde{\boldsymbol{A}}$ is negative definite. Second, we show that the preconditioning by $\boldsymbol{K}$ preserves stability.

\paragraph{Step 1: Negative Definiteness of $\tilde{\boldsymbol{A}}$.}
Consider the quadratic form for an arbitrary vector $\mathbf{x} \neq \mathbf{0}$. Let $\mathbf{y} = \tilde{\boldsymbol{\Phi}}\mathbf{x}$. By Assumption \ref{ass:features} (full rank), $\mathbf{y} \neq \mathbf{0}$.
\begin{equation}
    \mathbf{x}^\top \tilde{\boldsymbol{A}} \mathbf{x} = \mathbf{y}^\top \boldsymbol{D}_\pi (\gamma \boldsymbol{P}_\pi - \boldsymbol{I}) \mathbf{y} = \gamma \langle \mathbf{y}, \boldsymbol{P}_\pi \mathbf{y} \rangle_{\boldsymbol{D}_\pi} - \|\mathbf{y}\|_{\boldsymbol{D}_\pi}^2.
\end{equation}
The transition operator is a non-expansion in the $\boldsymbol{D}_\pi$-weighted norm \citep{tsitsiklis1997analysis}. Applying the Cauchy-Schwarz inequality:
\begin{equation}
    \langle \mathbf{y}, \boldsymbol{P}_\pi \mathbf{y} \rangle_{\boldsymbol{D}_\pi} \le \|\mathbf{y}\|_{\boldsymbol{D}_\pi} \|\boldsymbol{P}_\pi \mathbf{y}\|_{\boldsymbol{D}_\pi} \le \|\mathbf{y}\|_{\boldsymbol{D}_\pi}^2.
\end{equation}
Substituting this back yields:
\begin{equation}
    \mathbf{x}^\top \tilde{\boldsymbol{A}} \mathbf{x} \le (\gamma - 1) \|\mathbf{y}\|_{\boldsymbol{D}_\pi}^2.
\end{equation}
Since $\gamma < 1$, the quadratic form is strictly negative. Thus $\tilde{\boldsymbol{A}}$ is negative definite (and Hurwitz).

\paragraph{Step 2: Stability with $\eta > 0$.}
The system matrix of the ODE is $\tilde{\boldsymbol{A}}_\eta = \boldsymbol{K} \tilde{\boldsymbol{A}}$. We analyze its spectrum via a similarity transformation. Consider $\boldsymbol{S} = \boldsymbol{K}^{-1/2} \tilde{\boldsymbol{A}}_\eta \boldsymbol{K}^{1/2}$.
Substituting $\tilde{\boldsymbol{A}}_\eta = \boldsymbol{K} \tilde{\boldsymbol{A}}$:
\begin{equation}
    \boldsymbol{S} = \boldsymbol{K}^{-1/2} (\boldsymbol{K} \tilde{\boldsymbol{A}}) \boldsymbol{K}^{1/2} = \boldsymbol{K}^{1/2} \tilde{\boldsymbol{A}} \boldsymbol{K}^{1/2}.
\end{equation}
We now examine the definiteness of $\boldsymbol{S}$. For any non-zero vector $\mathbf{u}$:
\begin{equation}
    \mathbf{u}^\top \boldsymbol{S} \mathbf{u} = \mathbf{u}^\top \boldsymbol{K}^{1/2} \tilde{\boldsymbol{A}} \boldsymbol{K}^{1/2} \mathbf{u}.
\end{equation}
Let $\mathbf{v} = \boldsymbol{K}^{1/2} \mathbf{u}$. Since $\eta > 0$, $\boldsymbol{K}$ is positive definite, implying $\mathbf{v} \neq \mathbf{0}$. The expression simplifies to $\mathbf{v}^\top \tilde{\boldsymbol{A}} \mathbf{v}$.
From Step 1, we know $\mathbf{v}^\top \tilde{\boldsymbol{A}} \mathbf{v} < 0$. Therefore, $\mathbf{u}^\top \boldsymbol{S} \mathbf{u} < 0$, meaning $\boldsymbol{S}$ is negative definite.

Since $\boldsymbol{S}$ is negative definite, all its eigenvalues have strictly negative real parts. Because $\tilde{\boldsymbol{A}}_\eta$ is similar to $\boldsymbol{S}$, they share the same eigenvalues. Thus, $\tilde{\boldsymbol{A}}_\eta$ is Hurwitz. By standard stochastic approximation theory \citep{borkar2008stochastic}, the iteration converges globally to the unique fixed point $\tilde{\boldsymbol{w}}^*$.
\end{proof}

\subsection{Convergence in the Undiscounted Episodic Setting}

While Theorem \ref{thm:convergence} relies on $\gamma < 1$ to provide a uniform contraction, the undiscounted case is stabilized structurally by the unrolled Markov chain and the boundary conditions of the terminal features.

\begin{theorem}
\label{thm:convergence_episodic}
Let $\gamma = 1$ and $\eta > 0$. Under Assumptions \ref{ass:ergodicity}, \ref{ass:features}, and \ref{ass:stepsize}, in the episodic setting, the parameter $\tilde{\boldsymbol{w}}_t$ converges with probability 1 to the unique fixed point $\tilde{\boldsymbol{w}}^* = -\tilde{\boldsymbol{A}}^{-1}\tilde{\boldsymbol{b}}$.
\end{theorem}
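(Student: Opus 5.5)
The plan mirrors the proof of Theorem \ref{thm:convergence}. Step 2 there (the similarity transform $\boldsymbol{K}^{-1/2}(\boldsymbol{K}\tilde{\boldsymbol{A}})\boldsymbol{K}^{1/2}$) only needs $\tilde{\boldsymbol{A}}$ to be negative definite and $\eta>0$, so it carries over verbatim. Convergence with probability 1 then follows from the same stochastic approximation result \citep{borkar2008stochastic}. The only real work is to re-establish Step 1 when $\gamma=1$, i.e., to show $\mathbf{x}^\top\tilde{\boldsymbol{A}}\mathbf{x}<0$ for all $\mathbf{x}\neq\mathbf{0}$ without a $(\gamma-1)$ margin.

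For Step 1 I will work in the unrolled chain and make explicit how the terminal boundary enters. Let $\mathbf{y}=\tilde{\boldsymbol{\Phi}}\mathbf{x}$. Since $\tilde{\boldsymbol{\phi}}(S_T)\equiv\mathbf{0}$, we have $y(s_T)=0$ at every terminal state. The episodic TD error bootstraps to zero at termination, and the terminal state itself generates no update. So I take $\boldsymbol{P}_\pi$ in $\tilde{\boldsymbol{A}}$ to be the transition matrix of the unrolled chain, in which the terminal row maps to $d_0$, but with the terminal contribution removed; equivalently, the $\boldsymbol{D}_\pi$ entry for terminal states multiplies a zero row of $\tilde{\boldsymbol{\Phi}}$. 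Then
\[
\mathbf{x}^\top\tilde{\boldsymbol{A}}\mathbf{x}=\langle\mathbf{y},\boldsymbol{P}_\pi\mathbf{y}\rangle_{\boldsymbol{D}_\pi}-\|\mathbf{y}\|^2_{\boldsymbol{D}_\pi}
=-\tfrac12\sum_{s,s'}d_\pi(s)P_\pi(s,s')\big(y(s)-y(s')\big)^2+\tfrac12\Big(\sum_{s'}\big(d_\pi\boldsymbol{P}_\pi\big)(s')y(s')^2-\sum_s d_\pi(s)y(s)^2\Big).
\]
By stationarity, $d_\pi\boldsymbol{P}_\pi=d_\pi$, so the last bracket vanishes. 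Any terminal-to-start transitions whose contribution is dropped only make the form more negative, since they remove nonnegative $y(s_0)^2$ mass from the positive term. Hence $\mathbf{x}^\top\tilde{\boldsymbol{A}}\mathbf{x}\le -\tfrac12\sum d_\pi(s)P_\pi(s,s')(y(s)-y(s'))^2\le 0$. This is the standard Dirichlet-form identity, which gives the non-expansion used in Step 1 with the slack written out.

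The main obstacle is strictness. Suppose $\mathbf{x}^\top\tilde{\boldsymbol{A}}\mathbf{x}=0$. Then $y(s)=y(s')$ along every edge with $d_\pi(s)P_\pi(s,s')>0$. By Assumption \ref{ass:ergodicity}, every state has positive stationary mass and every non-terminal state reaches some terminal state along positive-probability edges. Propagating equality backwards from a terminal state, where $y=0$, gives $y\equiv0$ on all states. By the full column rank in Assumption \ref{ass:features}, this forces $\mathbf{x}=\mathbf{0}$, a contradiction. I must check carefully that the edges used in this propagation are genuine episode transitions (non-terminal to non-terminal, or non-terminal to terminal) and not the artificial reset edge. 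The reset edge is exactly the one whose contribution I drop, so the argument does not rely on it. If the bookkeeping of the reset term turns out delicate, my fallback is to restrict $\boldsymbol{P}_\pi$ to non-terminal states, giving a strictly substochastic matrix $\boldsymbol{Q}$. I would then show $\|\boldsymbol{Q}\mathbf{y}\|_{\boldsymbol{D}_\pi}<\|\mathbf{y}\|_{\boldsymbol{D}_\pi}$ for $\mathbf{y}\neq\mathbf{0}$, using Jensen's inequality plus $d_\pi\boldsymbol{Q}\le d_\pi$ with strict inequality at the start states. Strictness would come from irreducibility, which forces any nonzero $\mathbf{y}$ to place mass somewhere that "leaks" to termination.

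With $\tilde{\boldsymbol{A}}$ negative definite, hence invertible, the fixed point $\tilde{\boldsymbol{w}}^*=-\tilde{\boldsymbol{A}}^{-1}\tilde{\boldsymbol{b}}$ is unique. Step 2 of Theorem \ref{thm:convergence} shows $\boldsymbol{K}\tilde{\boldsymbol{A}}$ is Hurwitz. The noise conditions, martingale-difference noise with bounded variance from Assumption \ref{ass:stepsize}, bounded features, and a finite ergodic chain, are identical to the discounted case. So the ODE method applies as before and yields almost sure convergence to $\tilde{\boldsymbol{w}}^*$.
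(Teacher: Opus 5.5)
Your proposal is correct and follows essentially the same route as the paper. Step 2 is reused verbatim. Step 1 obtains $\mathbf{x}^\top\tilde{\boldsymbol{A}}\mathbf{x}\le 0$ from the non-expansion of $\boldsymbol{P}_\pi$ in the $\boldsymbol{D}_\pi$-norm, and gets strictness from the boundary condition $y(S_T)=0$ plus irreducibility forcing $\mathbf{y}=\mathbf{0}$, which contradicts full column rank. Your Dirichlet-form identity, with propagation along genuine episode edges only, justifies the equality case ($\mathbf{y}$ constant along positive-probability edges) that the paper asserts without proof, and the substochastic fallback is not needed.
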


\begin{proof}
The proof follows a similar two-step structure to Theorem \ref{thm:convergence}: We first show that $\tilde{\boldsymbol{A}}$ is strictly negative definite due to the terminal state boundary condition. We then apply the same preconditioning step to guarantee stability.

\paragraph{Step 1: Negative Definiteness of $\tilde{\boldsymbol{A}}$ at $\gamma = 1$.}
Consider the quadratic form for an arbitrary vector $\mathbf{x} \neq \mathbf{0}$. Let $\mathbf{y} = \tilde{\boldsymbol{\Phi}}\mathbf{x}$. By Assumption \ref{ass:features} (full rank), $\mathbf{y} \neq \mathbf{0}$. 

At $\gamma = 1$, the quadratic form evaluates to:
$$ \mathbf{x}^\top \tilde{\boldsymbol{A}} \mathbf{x} = \mathbf{y}^\top \boldsymbol{D}_\pi (\boldsymbol{P}_\pi - \boldsymbol{I}) \mathbf{y} = \langle \mathbf{y}, \boldsymbol{P}_\pi \mathbf{y} \rangle_{\boldsymbol{D}_\pi} - \|\mathbf{y}\|_{\boldsymbol{D}_\pi}^2 $$

Because $\boldsymbol{P}_\pi$ is the transition matrix of an ergodic Markov chain (Assumption \ref{ass:ergodicity}), it is a non-expansion in the $\boldsymbol{D}_\pi$-weighted norm. Applying the Cauchy-Schwarz inequality yields:
$$ \langle \mathbf{y}, \boldsymbol{P}_\pi \mathbf{y} \rangle_{\boldsymbol{D}_\pi} \le \|\mathbf{y}\|_{\boldsymbol{D}_\pi}^2 $$

In an ergodic Markov chain, this inequality is a strict equality if and only if $\mathbf{y} = c\mathbf{1}$ for some scalar $c$ (i.e., a constant vector). If we evaluate $\mathbf{y}$ at a terminal state $S_T$, by Definition 1 and Assumption \ref{ass:features}, the expanded features for terminal states are explicitly zeroed, such that $\tilde{\boldsymbol{\phi}}(S_T) \equiv \mathbf{0}$. Therefore:
$$ y(S_T) = \tilde{\boldsymbol{\phi}}(S_T)^\top \mathbf{x} = 0 $$

Because $y(S_T) = 0$, $\mathbf{y}$ can only be a constant vector across all states if it is the zero vector ($\mathbf{y} = \mathbf{0}$). However, because $\tilde{\boldsymbol{\Phi}}$ has full column rank and $\mathbf{x} \neq \mathbf{0}$, it must be that $\mathbf{y} \neq \mathbf{0}$. As a result, $\mathbf{y}$ cannot be a constant vector and the inequality is strictly bounded away from equality:
$$ \langle \mathbf{y}, \boldsymbol{P}_\pi \mathbf{y} \rangle_{\boldsymbol{D}_\pi} < \|\mathbf{y}\|_{\boldsymbol{D}_\pi}^2 $$

Substituting this strict inequality back into the quadratic form yields:
$$ \mathbf{x}^\top \tilde{\boldsymbol{A}} \mathbf{x} < 0 $$
Thus, $\tilde{\boldsymbol{A}}$ is strictly negative definite (Hurwitz).

\paragraph{Step 2: Stability with $\eta > 0$.}
The stability of the coupled ODE $\dot{\tilde{\boldsymbol{w}}} = \boldsymbol{K} (\tilde{\boldsymbol{A}} \tilde{\boldsymbol{w}} + \tilde{\boldsymbol{b}})$ follows the similarity transformation as in Theorem \ref{thm:convergence}. We consider the preconditioned system matrix $\tilde{\boldsymbol{A}}_\eta = \boldsymbol{K} \tilde{\boldsymbol{A}}$ and its similar matrix $\boldsymbol{S} = \boldsymbol{K}^{1/2} \tilde{\boldsymbol{A}} \boldsymbol{K}^{1/2}$. 

Because $\eta > 0$, $\boldsymbol{K}$ is positive definite. For any non-zero vector $\mathbf{u}$, let $\mathbf{v} = \boldsymbol{K}^{1/2} \mathbf{u} \neq \mathbf{0}$. We have:
$$ \mathbf{u}^\top \boldsymbol{S} \mathbf{u} = \mathbf{v}^\top \tilde{\boldsymbol{A}} \mathbf{v} < 0 $$

Therefore, $\boldsymbol{S}$ is negative definite and all its eigenvalues have strictly negative real parts. Because $\tilde{\boldsymbol{A}}_\eta$ is similar to $\boldsymbol{S}$, it shares its eigenvalues and is also Hurwitz. By standard stochastic approximation theory, the sequence converges globally to the unique fixed point $\tilde{\boldsymbol{w}}^*$.
\end{proof}

\clearpage



\bibliography{references}
\bibliographystyle{rlj}

\beginSupplementaryMaterials

\section*{Episodic problems as state-dependent discounting}

It has been previously acknowledged that episodic problems can be implemented as infinite-horizon problems with a state-dependent discount function \citep{sutton1995tdmodels,sutton2011horde,white2016taskspec}. For example, we can have $\gamma(s') = 0$ if $s'$ is terminal, and have it equal to the problem's discount otherwise. The terminal state would then transition back to a start state.

Consider an infinite-horizon return with potential-based reward shaping and state-dependent discounting. We define $\gamma_t\defeq\gamma(S_t)$ for notational convenience:
\begin{align*}
    G^\Phi_t &\defeq \sum_{k=t}^\infty{\Bigg(\prod_{i=t+1}^k{\gamma_i}\Bigg)\big(R_{k+1} + F(S_k,A_k,S_{k+1})\big)} \\
    &= \sum_{k=t}^\infty{\Bigg(\prod_{i=t+1}^k{\gamma_i}\Bigg)\big(R_{k+1} + \gamma_{k+1}\Phi(S_{k+1}) - \Phi(S_k)\big)} \\
    &= \sum_{k=t}^\infty{\Bigg(\prod_{i=t+1}^k{\gamma_i}\Bigg)R_{k+1}} +  \sum_{k=t}^\infty{\Bigg(\prod_{i=t+1}^k{\gamma_i}\Bigg)\gamma_{k+1}\Phi(S_{k+1})} -  \sum_{k=t}^\infty{\Bigg(\prod_{i=t+1}^k{\gamma_i}\Bigg)\Phi(S_k)} \\
    &= \sum_{k=t}^\infty{\Bigg(\prod_{i=t+1}^k{\gamma_i}\Bigg)R_{k+1}} +  \sum_{k=t+1}^\infty{\Bigg(\prod_{i=t+1}^k{\gamma_i}\Bigg)\Phi(S_{k})} -  \sum_{k=t+1}^\infty{\Bigg(\prod_{i=t+1}^k{\gamma_i}\Bigg)\Phi(S_k)} - \Phi(S_t) \\
    &= \sum_{k=t}^\infty{\Bigg(\prod_{i=t+1}^k{\gamma_i}\Bigg)R_{k+1}} - \Phi(S_t)
\end{align*}
Due to the Markov property, the subtraction of $\Phi(S_t)$ will not impact the ordering of policies. This also highlights that the learned values are relative to the potential function (i.e., it is akin to initializing the value function to $\Phi(s)$). Let us now consider the following potential function:
\begin{equation*}
    \Phi(s) \defeq \frac{b}{1-\gamma(s)}
\end{equation*}
If we implement an episodic problem by defining $\gamma(S_T)\defeq0$ and modifying the transition dynamics such that terminal states transition to a starting state sampled from a starting state distribution (independent of action), there are three scenarios:
\begin{equation*}
    F(s,a,s') = 
    \begin{cases}
        -\frac{b}{1-\gamma(s)}, & \textrm{if } \gamma(s') = 0 \\
        \gamma(s')\frac{b}{1-\gamma(s')}-b, & \textrm{if } \gamma(s)=0 \\
        \gamma(s')\frac{b}{1-\gamma(s')} - \frac{b}{1-\gamma(s)}, & \textrm{otherwise}
    \end{cases}
\end{equation*}
If we assume that all non-zero discounts are constant (i.e., $\gamma_t=\gamma$), this simplifies to:
\begin{equation*}
    F(s,a,s') = 
    \begin{cases}
        -\frac{b}{1-\gamma}, & \textrm{if } \gamma(s') = 0 \\
        \gamma\frac{b}{1-\gamma}-b, & \textrm{if } \gamma(s) = 0 \\
        -b, & \textrm{otherwise}
    \end{cases}
\end{equation*}
This resembles the result in Section \ref{sec:pbrs}, except we have an additional $\gamma\frac{b}{1-\gamma}$ term in the case where $\gamma(s) = 0$. This term is set up to cancel with a portion of the previous time step's $-\frac{b}{1-\gamma}$ term, leaving $-b$ behind. However, this case corresponds with transitioning \textit{from} a terminal state. Because we do not typically learn values for terminal states, this target typically will not be used. The remaining scenarios are consistent with what we get from the explicit episodic return.

\section*{$b$'s interpretation in an episodic problem}
\label{appendix:binterpret}

Prior average reward definitions lead to zero average reward in episodic problems (due to the equivalence between terminal states and an infinite loop of zero reward), that it is more informative to consider the bias-unit perspective in understanding what $b$ tends toward. Under a squared loss, the minimizing bias is the expectation of the targets under the behavior distribution. However, the bias is applied on a \textit{reward level}, suggesting that $b$ is related to the expected state-value, but subdivided over the time remaining in an episode and discounted appropriately:
\begin{gather*}
\mathbb{E}_{s\sim d_\pi}\Bigg[\sum_{k=0}^{T-t-1}{\gamma^k (R_{t+k+1}-b)}\Bigg|s=S_t\Bigg]=0 \\
\mathbb{E}_{s\sim d_\pi}\Bigg[\sum_{k=0}^{T-t-1}{\gamma^k b}\Bigg|s=S_t\Bigg]=\mathbb{E}_{s\sim d_\pi}\Bigg[\sum_{k=0}^{T-t-1}{\gamma^k R_{t+k+1}}\Bigg|s=S_t\Bigg] \\
\mathbb{E}_{s\sim d_\pi}\Bigg[b\frac{1-\gamma^{T-t}}{1-\gamma}\Bigg|s=S_t\Bigg]=\mathbb{E}_{s\sim d_\pi}[v_\pi(s)] \\
\mathbb{E}_{s\sim d_\pi}\Bigg[b\frac{1-\gamma^{T(s)}}{1-\gamma}\Bigg]=\mathbb{E}_{s\sim d_\pi}[v_\pi(s)] \\
b = \mathbb{E}_{s\sim d_\pi}\Bigg[v_\pi(s)\frac{1-\gamma}{1-\gamma^{T(s)}}\Bigg]
\end{gather*}
where $d_\pi$ represents normalized expected state visitation counts over non-terminal states under policy $\pi$ and $T(s)$ is the expected remaining episode length from state $s$.

\end{document}